\DeclareMathOperator*{\minimize}{minimize}				
\DeclareMathOperator*{\st}{subject\,to}					
\newcommand{\Xs}[0]{\ensuremath{\mathcal{X}}}						
\newcommand{\Us}[0]{\ensuremath{\mathcal{U}}}						
\newcommand{\Vs}[0]{\ensuremath{\mathcal{V}}}						
\newcommand{\Expect}{{\rm I\kern-.3em E}}				
\newenvironment{definition}[1][Definition]{\begin{trivlist}
\item[\hskip \labelsep {\bfseries #1}]}{\end{trivlist}}
\newtheorem{theorem}{Theorem}
\newtheorem{lemma}{Lemma}
\newtheorem{assumption}{Assumption}
\renewenvironment{proof}[1][\proofname]{\par
\pushQED{\qed}%
\normalfont \topsep6\p@\@plus6\p@\relax
\trivlist
\item\relax
{\itshape
#1\@addpunct{.}}\hspace\labelsep\ignorespaces
}{%
\popQED\endtrivlist\@endpefalse
}
\algnewcommand\And{\textbf{and} }
\algnewcommand\Or{\textbf{or} }
\title{\LARGE \bf
Receding-Constraint Model Predictive Control using a \\
Learned Approximate Control-Invariant Set
}
\author{Gianni Lunardi$^{1}$, Asia La Rocca$^{1}$, Matteo Saveriano$^{1}$ and Andrea Del Prete$^{1}$
\thanks{$^{1}$The authors are with the Industrial Engineering Department, University of Trento, Via Sommarive 11, 38123, Trento, Italy. {\tt \{name.surname\}@unitn.it}}%
\thanks{Funded by the European Union under NextGenerationEU, PRIN 2022 project STARLIT Prot. n. 2022ZE9J9J (CUP n. E53D23001130006), PRIN project ARIEL (CUP n. E53D23001020001), and project INVERSE under Grant Agreement n. 101136067.}
\thanks{\textcopyright2024 IEEE. Personal use of this material is permitted. Permission from IEEE must be obtained for all other uses, in any current or future media, including reprinting/republishing this material for advertising or promotional purposes, creating new collective works, for resale or redistribution to servers or lists, or reuse of any copyrighted component of this work in other works.}
}
\begin{document}

\maketitle

\thispagestyle{empty}
\pagestyle{empty}

\begin{abstract}
In recent years, advanced model-based and data-driven control methods are unlocking the potential of complex robotics systems, and we can expect this trend to continue at an exponential rate in the near future. 
However, ensuring safety with these advanced control methods remains a challenge. 
A well-known tool to make controllers (either Model Predictive Controllers or Reinforcement Learning policies) safe, is the so-called \emph{control-invariant set} (a.k.a. safe set).
Unfortunately, for nonlinear systems, such a set cannot be exactly computed in general. 
Numerical algorithms exist for computing approximate control-invariant sets, but classic theoretic control methods break down if the set is not exact.
This paper presents our recent efforts to address this issue. 
We present a novel Model Predictive Control scheme that can guarantee recursive feasibility and/or safety under weaker assumptions than classic methods. 
In particular, recursive feasibility is guaranteed by making the safe-set constraint move backward over the horizon, and assuming that such set satisfies a condition that is weaker than control invariance.
Safety is instead guaranteed under an even weaker assumption on the safe set, triggering a safe task-abortion strategy whenever a risk of constraint violation is detected.
We evaluated our approach on a simulated robot manipulator, empirically demonstrating that it leads to less constraint violations than state-of-the-art approaches, while retaining reasonable performance in terms of tracking cost, number of completed tasks, and computation time.
\end{abstract}

\section{Introduction}\label{sec:introduction}
Ensuring safety is crucial in all robotics applications. 
However, this is more and more difficult with the recently increasing complexity of control methods and robotic platforms.
Indeed, recent data-driven approaches, often relying on Reinforcement Learning (RL) algorithms, typically produce black-box policies that are inherently hard to certify as safe. 
Moreover, even model-based control methods for constrained nonlinear systems in practice struggle to guarantee safety, which consists in recursive constraint satisfaction (a.k.a. \textit{recursive feasibility}). 
This is because the classic approach to guaranteeing safety, both for Model Predictive Control (MPC) and for Quadratic-Programming-based control methods, relies on the assumption of knowing a so-called \emph{safe set} (a.k.a. control-invariant set)~\cite{Blanchini1999, Grune2017}, or a Control Barrier Function (CBF)~\cite{Ames2014, Wu2019}. 
However, exactly computing safe sets (or CBFs) for nonlinear systems is not feasible in general. 
Therefore, practitioners must rely on numerical methods to compute approximate versions of such sets (or functions)~\cite{Djeridane2006, Coquelin2007,Jiang2016,Rubies2016,Hsu2021,Dawson2023, Zhou2020, LaRocca2023}.
Unfortunately, safety guarantees are lost if the safe set is not exact.
A more detailed discussion of state-of-the-art MPC methods is postponed to Section~\ref{ssec:recursive_feasibility} and~\ref{ssec:soft_terminal_constraint}.

An alternative approach is \textit{Model Predictive Shielding} (MPS)~\cite{Li2020a, Bastani2021}, which relies on a backup policy that drives the state to an invariant set.
This set can be arbitrarily small, as long as the policy can reach it from a large set.
Contrary to a safe set, a backup policy does not need to be certified because its ability to drive the state to a safe set is verified at every control loop.
However, finding a valid backup policy and the associated (small) invariant set has a similar complexity to finding a (large) safe set.

In this paper, we present a novel MPC scheme that ensures: i) safety, assuming the safe set is a \emph{conservative} approximation of a specific backward reachable set; ii) recursive feasibility, assuming the safe set is N-step control invariant, which is a weaker assumption than classic control invariance.
We compared our approach with classic MPC schemes: the standard formulation (without terminal constraints but a longer horizon), and a formulation using the safe set to constrain the terminal state. 
Our method could successfully avoid constraint violation in more tests than the others, being able to trade off performance and safety depending on the conservativeness of the used safe set.

\section{Preliminaries}
This section introduces the problem of \emph{safety} in MPC, together with the concepts of control-invariant set and Recursive Feasibility.

\subsection{Notation}
\begin{itemize}
    \item $\mathbb{N}$ denotes the set of natural numbers;
    \item $\{ x_i \}_0^N$ denotes a discrete-time trajectory given by the sequence $(x_0, \dots, x_N)$;
    \item $x_{i|k}$ denotes the state at time step $k+i$ predicted when solving the MPC problem at time step $k$;
\end{itemize}

\subsection{Problem statement}
Let us consider a discrete-time dynamical system with state and control constraints:
\begin{equation} \label{eq:sys_dynamics}
    x_{i+1} = f(x_i, u_i),
\qquad
    x \in \Xs, \qquad u \in \Us .
\end{equation}
Our goal is to design a control algorithm to ensure \emph{safety} (i.e., constraint satisfaction), while preserving performance (i.e., cost minimization) as much as possible.
Let us define $\mathcal{S}$ as the set containing all the equilibrium states of our system:
\begin{equation}
    \mathcal{S} = \{ x \in \Xs\ | \, \exists \, u \in \Us : x = f(x,u) \}.
\end{equation}
To achieve our goal, we rely on the \emph{Infinite-Time Backward-Reachable Set}~\cite{Blanchini1999} of $\mathcal{S}$, which we denote as \Vs. 
Mathematically, it is defined as the subset of \Xs\ starting from which it is possible to reach $\mathcal{S}$ in finite time:
\begin{equation}
\begin{aligned}
\Vs \triangleq \{ x_0 \in \Xs \, | \, \exists \{u_i\}_{0}^{k}, k \in \mathbb{N}: \, \,
& x_{k+1} \in \mathcal{S}, x_i \in \Xs, \\
& u_i \in \Us, \forall \, i=0,\dots,k \} .
\end{aligned}
\end{equation}
As all backward reachable sets of equilibrium states, the set $\Vs$ is a control-invariant set~\cite{Blanchini1999}.
This means that, starting from inside \Vs, it is possible to remain inside \Vs\ indefinitely.
If we knew \Vs\ we could use it to construct a safe controller. 
However, we cannot reasonably assume to know it in general, but we rely instead on a more realistic assumption.
\begin{assumption} \label{ass:conservative_set}
    We know a conservative approximation of the set \Vs : 
    \begin{equation}
        \hat{\Vs} \subseteq \Vs
    \end{equation}
    Note that $\hat \Vs$ is not control invariant in general.
\end{assumption}
\begin{assumption} \label{ass:time_steps}
    We know an upper bound on the number of time steps needed to safely drive the system to an equilibrium from a state in $\hat \Vs$, which we refer to as $\bar{N}$.
\end{assumption}
As discussed in Section~\ref{sec:introduction}, numerical methods exists to compute approximations of \Vs. Among the others, we chose the Viability-Boundary Optimal Control (VBOC) method~\cite{LaRocca2023}, which finds a numerical approximation of $\Vs$ using states that are guaranteed to be on its boundary. The approximation can be made conservative by an appropriate choice of a safety margin. Additionally, VBOC produces also an estimate of $\bar{N}$, satisfying Assumption~\ref{ass:time_steps}.
Now we discuss different approaches to exploit $\hat{\Vs}$ in an MPC formulation to try to achieve safety.

\subsection{Model Predictive Control and Recursive Feasibility}
\label{ssec:recursive_feasibility}
Let us consider the following MPC problem:
\begin{subequations}
\begin{align}
        \minimize_{\{x_i\}_0^N,\{u_i\}_0^{N-1}}  &\quad \sum_{i = 0}^{N-1} \ell_i(x_i,u_i) + \ell_N(x_N)  \label{eq:mpc_cost} \\
        \st &\quad x_0 = x_{init} \label{eq:mpc_initial_conditions}\\
            &\quad x_{i+1} = f(x_i, u_i) \qquad i = 0 \dots N-1 \label{eq:mpc_dynamics}\\
            &\quad x_i \in \Xs, u_i \in \Us \qquad \;\; i = 0 \dots N-1 \label{eq:mpc_path_constraints}\\
            &\quad x_{N} \in \mathcal{X}_N, \label{eq:mpc_terminal_constraint}
\end{align}
\label{eq:mpc}
\end{subequations}
where $\ell(\cdot)$/$\ell_N(\cdot)$ is the running/terminal cost, $x_{init}$ is the current state, and $\mathcal{X}_N \subseteq \Xs$ is the terminal set~\cite{Mayne2000}.


Even though MPC is one of the most suited frameworks for controlling constrained systems, ensuring safety (i.e., constraint satisfaction) remains challenging when the dynamics or the constraints are nonlinear.
The most common approach to ensuring safety is based on Recursive Feasibility (RF), which guarantees that, under the assumption of no disturbances/modeling errors, if an MPC problem is feasible at the first loop, it remains feasible forever.

RF is guaranteed if the MPC horizon $N$ is \emph{sufficiently} long (see Section 8.2 of~\cite{Grune2017}). 
However, in general we cannot know how long $N$ should be.
Moreover, even if $N$ was known, it may be too long to result in acceptable computation times.

Alternatively, RF can be guaranteed by using the terminal set $\mathcal{X}_N$ to constrain the final state inside a \emph{control-invariant} set (see Section~\ref{ssec:soft_terminal_constraint}). 
While theoretically elegant, the practical issue with this approach is that control-invariant sets are extremely challenging (if not impossible) to compute for nonlinear systems/constraints.
A special case of this approach is when an \emph{equilibrium} state (or a set of equilibria) is used as terminal set.
This solves the issue of computing control-invariant sets, but at the price of (potentially drastically) reducing the \emph{basin of attraction} of the MPC.

Other approaches to RF exist that rely on the optimality properties of the solution and the stability of the closed loop (e.g., Section 8.3 of~\cite{Grune2017}).
However, these approaches require controllability and other conditions on running and terminal costs.
Therefore, they are not applicable to arbitrary cost formulations as the methods discussed in this paper.

\subsection{Soft Terminal Constraint}
\label{ssec:soft_terminal_constraint}
As discussed above, a common way to ensure recursive feasibility in MPC is to constrain the final state inside a control-invariant set, such as \Vs. 
Unfortunately, we do not know \Vs, but only $\hat{\Vs}$, which is not control invariant in general.
Therefore, using $\hat{\Vs}$ as terminal set in our MPC
does not ensure RF.
This means that our MPC problem could become unfeasible, and at that point classic MPC theory does not tell us what to do.
A common strategy to deal with unfeasibility is to relax the terminal constraint with a slack variable, which is heavily penalized in the cost function~\cite{Kerrigan2000soft, Zeilinger2014}.
In this way, when the terminal constraint cannot be satisfied, we can still get a solution that allows us to keep controlling the system, in the hope that perhaps the terminal constraint be satisfied again. 
However, this approach does not ensure \emph{safety}, nor RF,
because the soft constraint allows the state to leave $\hat{\Vs}$, which eventually can lead to constraint violations.

\section{Safe Model Predictive Control}
This section describes our novel MPC scheme, which relies on two components: a safe task-abortion strategy (Section~\ref{ssec:task_abortion}, and a receding-constraint MPC formulation (Section~\ref{ssec:receding_constraint}), which can  be used together (Section~\ref{ssec:abort_receding}).

\subsection{Safe Task Abortion} \label{ssec:task_abortion}
Our key idea to ensure safety relies on Assumption~\ref{ass:conservative_set} and \ref{ass:time_steps}, and on the following two assumptions.
\begin{assumption} \label{ass:computational_units}
We have access to two computational units, which we refer to as unit A and unit B. 
\end{assumption}
\begin{assumption} \label{ass:ocp_time}
We can solve the following OCP for any $x_{init} \in \hat{\Vs}$, in at most $N-1$ time steps:
\begin{equation}
    \begin{aligned}
        \minimize_{\{x_i\}_0^{\bar{N}},\{u_i\}_0^{\bar{N}-1}}  &\quad \sum_{i = 0}^{\bar{N}-1} \ell_i(x_i,u_i)  + \ell_{\bar{N}}(x_{\bar{N}})\\
        \st & \quad \eqref{eq:mpc_initial_conditions}, \eqref{eq:mpc_dynamics}, \eqref{eq:mpc_path_constraints} \\
            &\quad x_{\bar{N}} = x_{\bar{N}-1},
    \end{aligned}
    \label{eq:safe_abort_ocp}
\end{equation}
The choice of the cost function is irrelevant, and can simply be used to help the solver to converge faster.
\end{assumption}
Optimal Control Problem (OCP)~\eqref{eq:safe_abort_ocp} can be used to find a feasible trajectory to reach an equilibrium state from $x_{init}$.
Now we can describe our strategy to safely abort the task in case we detect a risk of constraint violation. 
Let us assume that we are using a classic MPC formulation with terminal constraint $x_N \in \hat{\Vs}$, and that at the MPC loop $k$ our problem becomes unfeasible. 
In this situation, we can follow these steps to safely abort the task:
\begin{enumerate}
    \item unit A uses the MPC solution computed at loop $k-1$ to reach the terminal state $x_{N|k-1} \in \hat{\Vs}$;
    \item in parallel, unit B solves  OCP~\eqref{eq:safe_abort_ocp}, using $x_{N|k-1}$ as initial state;
    \item after reaching $x_{N|k-1}$, we follow the solution of OCP~\eqref{eq:safe_abort_ocp} to safely reach an equilibrium state.
\end{enumerate}
This strategy allows us to reach a safe equilibrium state, where a stabilizing controller can be used to maintain the system still. 
Actually, we do not need to abort the task as soon as one MPC problem becomes unfeasible.
While we follow the last feasible solution, we can keep trying to solve OCP~\eqref{eq:mpc}. 
This strategy is summarized in Alg.~\ref{alg:terminal} and it can guarantee safety, as stated in the following Lemma.
\begin{lemma}
    Under Assumptions~\ref{ass:conservative_set} to~\ref{ass:ocp_time}, the hard terminal-constraint MPC with safe task abortion described in Alg.~\ref{alg:terminal} guarantees that constraints are never violated.
\end{lemma}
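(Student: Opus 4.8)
The plan is to show that, at every control instant, the state–control pair actually applied to the system lies on a trajectory that is already certified to satisfy the constraints \eqref{eq:mpc_path_constraints}, so that a violation can never occur. I would structure the argument as an induction over the MPC loops, maintaining the invariant that at the end of each loop the controller holds a stored plan that is feasible, constraint-satisfying, and terminates in $\hat{\Vs}$. The base case is the standing assumption that the first MPC instance \eqref{eq:mpc} is feasible, which furnishes an initial such plan $\{x_{i|0}\}_0^N$ with $x_{N|0}\in\hat{\Vs}$.

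For the inductive step I would distinguish two regimes. While \eqref{eq:mpc} remains feasible, the applied state and control are taken from a fresh feasible MPC solution and therefore satisfy $x_i\in\Xs$, $u_i\in\Us$ and $x_N\in\hat{\Vs}\subseteq\Vs\subseteq\Xs$ (the last inclusion by Assumption~\ref{ass:conservative_set} and the definition of \Vs), so the invariant is preserved trivially. The only nontrivial regime is the first loop $k$ at which \eqref{eq:mpc} becomes infeasible. Here I would invoke the stored plan from loop $k-1$: unit A keeps executing $\{x_{i|k-1}\}$ from the current state $x_{1|k-1}$ up to $x_{N|k-1}$, and every state–control pair along this tail was part of the feasible solution of loop $k-1$, hence satisfies \eqref{eq:mpc_path_constraints}, with the endpoint $x_{N|k-1}\in\hat{\Vs}$.

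The heart of the proof is the hand-off from this tail to a safe abort trajectory. Since $x_{N|k-1}\in\hat{\Vs}\subseteq\Vs$ by Assumption~\ref{ass:conservative_set}, an admissible trajectory from $x_{N|k-1}$ to an equilibrium of $\mathcal{S}$ exists, and by Assumption~\ref{ass:time_steps} it needs at most $\bar N$ steps; this makes OCP~\eqref{eq:safe_abort_ocp} initialized at $x_{N|k-1}$ feasible, and its solution satisfies \eqref{eq:mpc_path_constraints} and ends at an equilibrium $x_{\bar N}=x_{\bar N-1}$. It remains to check the timing: while unit A traverses the $N-1$ transitions from $x_{1|k-1}$ to $x_{N|k-1}$, unit B (available by Assumption~\ref{ass:computational_units}) solves \eqref{eq:safe_abort_ocp} in parallel, which by Assumption~\ref{ass:ocp_time} takes at most $N-1$ steps; hence the abort plan is ready no later than the instant unit A reaches $x_{N|k-1}$. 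The tail and the abort solution then concatenate into a single admissible trajectory that drives the system to an equilibrium, where a stabilizing controller keeps it with admissible control forever, so \eqref{eq:mpc_path_constraints} holds for all future time.

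I expect the main obstacle to be making this timing/synchronization argument airtight: one must verify that the number of steps unit A needs to reach the hand-off state is never less than the worst-case $N-1$ steps unit B needs for \eqref{eq:safe_abort_ocp}, and that --- under the no-disturbance assumption --- the state unit A actually reaches coincides with the initial state fed to unit B, so that the concatenation is dynamically consistent. Everything else reduces to the stated assumptions and to the inclusions $\hat{\Vs}\subseteq\Vs\subseteq\Xs$, which hold by Assumption~\ref{ass:conservative_set} and the definition of \Vs.
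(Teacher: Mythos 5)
Your proposal is correct and follows essentially the same argument as the paper's proof: the safe-abort OCP~\eqref{eq:safe_abort_ocp} is feasible from $x_{N|k-1}\in\hat{\Vs}$ by Assumptions~\ref{ass:conservative_set} and~\ref{ass:time_steps}, Assumption~\ref{ass:ocp_time} guarantees unit B finishes within the $N-1$ steps unit A needs to reach the hand-off state, and the concatenated trajectory ends at an equilibrium that can be held forever. Your inductive framing and explicit timing/synchronization check merely spell out details the paper compresses into its three-sentence ``straightforward'' proof, so there is no substantive difference in route.
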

\begin{proof}
This proof is straightforward. 
OCP~\eqref{eq:safe_abort_ocp} is always feasible because, by Assumption~\ref{ass:conservative_set} and \ref{ass:time_steps}, from any state in $\hat{\Vs}$ we can reach an equilibrium in at most $\bar{N}$ time steps.
Assumption~\ref{ass:ocp_time} ensures that, by dedicating a computational unit to solving OCP~\eqref{eq:safe_abort_ocp}, we get a solution before reaching the terminal state of the last feasible MPC problem, $x_{N|k-1}$.
After reaching $x_{N|k-1}$, we follow the solution of OCP~\eqref{eq:safe_abort_ocp} to reach an equilibrium state, in which we can stay forever without violating the constraints.
\end{proof}
Our most critical assumption is probably Assumption~\ref{ass:ocp_time}, which relies on the MPC horizon $N$ to be sufficiently long, and on $\bar{N}$ not to be too large, to allow for enough computation time to solve the OCP.
This may be challenging because we can expect $\bar{N}$ to be rather large, since it must be sufficient to allow the system to reach an equilibrium from any state in $\hat{\Vs}$. 
At the same time, $N$ cannot be set too large because it is proportional to the computation time of the MPC problem.
However, this assumption turned out to be satisfied in our tests; if that was not the case, learning-based warm-start techniques could be used to speed-up computation~\cite{Mansard2018,Grandesso2023}.

\begin{algorithm}[t]
\caption{Terminal-Constraint MPC with Safe Abortion}
\small
\begin{algorithmic}[1]
\Require 
Number of time steps $T$,
Initial state $x_0$,
Initial guess $\{x^g_i\}_0^{N}, \{u^g_i\}_0^{N-1}$,
OCP~\eqref{eq:mpc},
Safe-abort OCP~\eqref{eq:safe_abort_ocp}
    \State $fails \leftarrow 0$ \Comment{Counter for failed OCP's}
    \For{$t = 0 \rightarrow T-1$}
        \State $\{x_i^*\}_0^{N}, \{u_i^*\}_0^{N-1}, feas \leftarrow \textsc{OCP}(x_t, \{x^g_i\}_0^{N}, \{u^g_i\}_0^{N-1})$
        \If{$feas$ = True} \Comment{If OCP's solution is feasible}
            \State $fails \leftarrow 0$ \Comment{Reset counter}
        \Else
            \If{$fails = 0$} \Comment{Start solving \eqref{eq:safe_abort_ocp} in Unit B}
                \State $\textsc{SolveSafeAbortOCP}(x_{N-1}^g)$
            \EndIf
            \If{$fails = N-1$} \Comment{Abort task}
                \State \Return $\textsc{FollowSafeAbortTrajectory()}$
            \EndIf
            \State $fails \leftarrow fails + 1$ \Comment{Increment counter}
            \State $\{x^*_i\}_0^{N}, \{u^*_i\}_0^{N-1} \leftarrow \{x_i^g\}_0^{N}, \{u_i^g\}_0^{N-1}$ \Comment{Copy last feasible solution}
        \EndIf
        \State $x_{t+1} \leftarrow f(x_t,u_0^*)$ \Comment{Simulate system}
        \State $\{x^g_i\}_0^{N-1}, \{u^g_i\}_0^{N-2} \leftarrow \{x_i^*\}_1^{N}, \{u_i^*\}_1^{N-1}$
        \State $x^g_N, u^g_{N-1} \leftarrow x^g_{N-1}, u^g_{N-2}$
    \EndFor
\end{algorithmic}
\label{alg:terminal}
\end{algorithm}

\subsection{Receding-Constraint MPC} 
\label{ssec:receding_constraint}
\begin{figure}[tbp]
\centering
\includegraphics[width=0.5\textwidth]{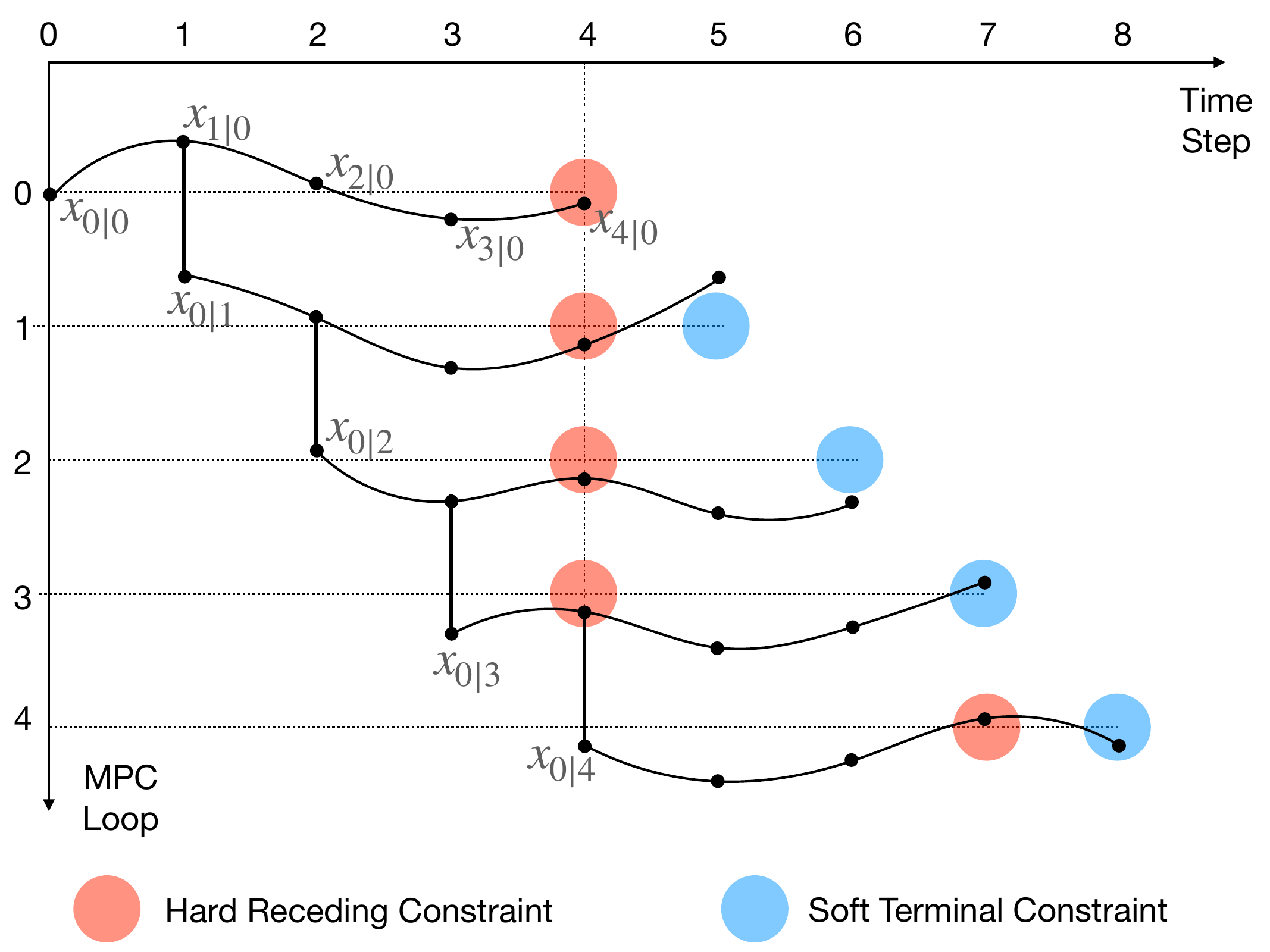}
\caption{Example of Receding-Constraint MPC with $N=4$. After the MPC loop 3, the receding constraint slides forward because $x_{4|3} \in \hat{\Vs}$.}
\label{fig:receding_constr_example}
\end{figure}

Instead of relying exclusively on the final state to ensure safety, we could exploit the fact that, as long as at least one state $x_r \in \hat{\Vs}$ (with $1 \le r \le N$), we know that $x_1 \in \Vs$ because from $x_1$ we can reach $x_r$. 
This suggests that a less conservative constraint to include in our OCP would be:
\begin{equation} \label{eq:intermediate_hard_constraints}
    (x_1 \in \hat{\Vs}) \, \lor \, (x_2 \in \hat{\Vs}) \lor \, \dots \, \lor \, (x_N \in \hat{\Vs})
\end{equation}
Unfortunately, OR constraints are extremely challenging for numerical solvers.
Even if this constraint cannot be used, we can find other ways to exploit this insight.

We suggest to adapt online the time step at which we constrain the state in $\hat{\Vs}$.
For instance, if at the MPC loop $k-1$ we had $x_{r|k-1} \in \hat{\Vs}$, at the loop $k$ we know that it is possible to have $x_{r-1|k} \in \hat{\Vs}$ (assuming no disturbances and modeling errors), therefore we can impose this constraint in a hard way.
This is sufficient to ensure safety for $r$ loops, during which this \emph{receding constraint} would slide backward along the horizon.
However, once the receding constraint reaches time step 0, we can no longer rely on it to ensure safety.
Therefore, we suggest to maintain also a soft constraint to encourage the terminal state to be in $\hat{\Vs}$.
This MPC formulation can be stated as:
\begin{equation}
    \begin{aligned}
        \minimize_{\{x_i\}_0^N,\{u_i\}_0^{N-1}, s}  &\quad \sum_{i = 0}^{N-1} \ell_i(x_i,u_i) + \ell_N(x_N) + w_s |s| \\
        \st &\quad \eqref{eq:mpc_initial_conditions}, \eqref{eq:mpc_dynamics}, \eqref{eq:mpc_path_constraints} \\
            & \quad x_r \in \hat{\Vs} \\
            &\quad x_{N} \in \hat{\Vs} \oplus s. \\
    \end{aligned}
    \label{eq:receding_constr_mpc}
\end{equation}
After solving the MPC at loop $k-1$, we can check whether $x_{N|k-1} \in \hat{\Vs}$; if that is the case, at loop $k$ we can move the receding constraint forward on $x_{N-1|k}$, which ensures safety for other $N-1$ loops. 
Actually, we can even check whether $x_{i} \in \hat{\Vs}$, for any $i>r$, and if that is the case we can set $r=i-1$ at the next loop.
The resulting algorithm is summarized in Alg.~\ref{alg:receding_with_abort} (with \texttt{SafeAbortFlag}  set to false), and a simple example is depicted in Fig.~\ref{fig:receding_constr_example}.



\begin{algorithm}[t]
\caption{Receding-Constraint MPC with Task Abortion}
\small
\begin{algorithmic}[1]
\Require 
Number of time steps $T$,
Initial state $x_0$,
\mbox{Initial guess $\{x^g_i\}_0^{N}, \{u^g_i\}_0^{N-1}$},
OCP~\eqref{eq:receding_constr_mpc},
\texttt{SafeAbortFlag}, 
Safe-abort OCP~\eqref{eq:safe_abort_ocp}
    \State $r \leftarrow N$  \Comment{Receding constraint index}
    \For{$t = 0 \rightarrow T-1$}
        \If{\texttt{SafeAbortFlag} \textbf{and} $r = 0$} \Comment{Abort task}
            \State \Return $\textsc{FollowSafeAbortTrajectory()}$
        \EndIf
        \State $\{x_i^*\}_0^{N}, \{u_i^*\}_0^{N-1} \leftarrow \textsc{OCP}(r, x_t, \{x^g_i\}_0^{N}, \{u^g_i\}_0^{N-1})$
        \State $r \gets r - 1$  \Comment{Recede constraint}
        \For{$k = r+2 \rightarrow N$} \Comment{Search last state in $\hat{\Vs}$}
            \If{$x_k^* \in \hat \Vs$}
                \State $r \leftarrow k - 1$
            \EndIf
        \EndFor
        \If{\texttt{SafeAbortFlag}  \textbf{and} $r = 0$} \Comment{Cannot recede}
            \State $\textsc{SolveSafeAbortOCP}(x_{1}^*)$ \Comment{Solve~\eqref{eq:safe_abort_ocp} in Unit B}
        \EndIf
        \State $x_{t+1} \leftarrow f(x_t,u_0^*)$ \Comment{Simulate dynamics}
        \State $\{x^g_i\}_0^{N-1}, \{u^g_i\}_0^{N-2} \gets \{x_i^*\}_1^{N}, \{u_i^*\}_1^{N-1}$
        \State $x^g_N, u^g_{N-1} \leftarrow x^g_{N-1}, u^g_{N-2}$
    \EndFor
\end{algorithmic}
\label{alg:receding_with_abort}
\end{algorithm}

To clarify the theoretical properties of this receding-constraint formulation, we first need to introduce the concept of $N$-step control invariant set.
\begin{definition}
    A set $\mathcal{A} \subseteq \Xs$ is \emph{$N$-step control invariant} if, starting from any state in $\mathcal{A}$, it is possible either to remain in $\mathcal{A}$, or to leave $\mathcal{A}$ and come back to it within $N$ time steps:
    \begin{equation} \begin{aligned} \label{eq:n_step_control_invariance}
        \forall x_0 \in \mathcal{A}: &\, 
        \exists \, \{ u_i \}_{0}^{k-1}, \,
        1 \le k \le N, \\
        & x_k \in \mathcal{A}, \,
        x_i \in \Xs, \,
        u_i \in \Us, \,
        \forall i=0, \dots, k-1
    \end{aligned} \end{equation}
\end{definition}
This is an extension of the well-known control invariance, with 1-step control invariance being equivalent to classic control invariance.
Now we can state under which conditions the receding-constraint MPC is recursively feasible.

\begin{theorem} \label{th:recursive_feasibility}
Assuming $\hat{\Vs}$ is $N$-step control invariant and the penalty on the soft terminal constraint $w_s$ is larger than the associated optimal Lagrange multiplier, the Receding-Constraint MPC formulation described in Alg.~\ref{alg:receding_with_abort} (with \emph{\texttt{SafeAbortFlag}}  set to false) is recursively feasible.
\end{theorem}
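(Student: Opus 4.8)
The plan is to prove recursive feasibility by the standard time-shift argument, adapted to the receding hard constraint and the soft terminal constraint, and to organize it as an induction over the MPC loops. The inductive hypothesis I would carry is that at loop $k$ problem~\eqref{eq:receding_constr_mpc} is feasible and its optimizer satisfies $x_{N|k}^* \in \hat{\Vs}$ (equivalently, the terminal slack vanishes, $s_k = 0$). The base case is immediate: at loop $0$ the receding index is $r = N$, so the hard receding constraint and the soft terminal constraint both act on $x_N$, and feasibility of the first problem forces $x_{N|0}^* \in \hat{\Vs}$.

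For the inductive step I would build the usual shifted candidate for loop $k+1$, setting $\tilde{x}_i = x_{i+1|k}^*$ and $\tilde{u}_i = u_{i+1|k}^*$ on the indices where they are defined. Then the initial condition \eqref{eq:mpc_initial_conditions}, the dynamics \eqref{eq:mpc_dynamics}, and the path constraints \eqref{eq:mpc_path_constraints} are inherited directly from the loop-$k$ optimizer, and the soft terminal constraint is never an obstacle because any terminal state admits a slack $s$ with $\tilde{x}_N \in \hat{\Vs} \oplus s$. The receding constraint is handled by the index bookkeeping of Alg.~\ref{alg:receding_with_abort}: whatever index $r_{k+1}$ the algorithm returns is of the form $j-1$ for some $j$ with $x_{j|k}^* \in \hat{\Vs}$ (the worst case being $j = r_k$, guaranteed by the loop-$k$ hard constraint). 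The shift then places exactly that state at position $r_{k+1}$, so $\tilde{x}_{r_{k+1}} = x_{j|k}^* \in \hat{\Vs}$ by construction, and the hard receding constraint holds automatically.

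The two assumptions enter when I complete the horizon and when I re-establish the invariant. First, the exact-penalty property supplies $s = 0$: since $w_s$ exceeds the optimal multiplier of the terminal constraint, the soft-constrained optimum coincides with the hard-constrained one whenever the latter is feasible, giving $x_{N|k}^* \in \hat{\Vs}$. Because $\tilde{x}_{N-1} = x_{N|k}^* \in \hat{\Vs}$, the only missing control $\tilde{u}_{N-1}$ is produced by $N$-step control invariance of $\hat{\Vs}$: its defining property \eqref{eq:n_step_control_invariance} furnishes an admissible $\tilde{u}_{N-1} \in \Us$ whose successor stays in $\Xs$, which is all the final path constraint requires. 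This closes the shifted candidate and establishes feasibility at loop $k+1$.

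The hard part is maintaining the invariant $x_{N|k}^* \in \hat{\Vs}$ across loops, i.e., showing the exact-penalty argument can actually bite at every step. This demands that the \emph{hard} terminal problem be feasible at loop $k+1$, namely that a trajectory starting from the checkpoint $\tilde{x}_{N-1} \in \hat{\Vs}$ reaches $\hat{\Vs}$ at the terminal index $N$. I would argue this by chaining $N$-step control invariance from the committed checkpoint, showing that re-entry within $N$ steps, combined with the algorithm resetting $r$ to the last in-set index, prevents the checkpoint from receding below index $0$ before a new one is established. Making this chaining airtight is the delicate point: $N$-step invariance guarantees re-entry \emph{within} $N$ steps but not at a prescribed index, so the argument must rule out the degenerate case in which the receding index would be driven negative. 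This is precisely where the $N$-step (rather than $1$-step) control-invariance assumption, together with the horizon length $N$, is indispensable, and it is the step I expect to require the most care.
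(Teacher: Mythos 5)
Your base case and shift machinery are standard and fine, but your inductive invariant --- feasibility \emph{and} $x_{N|k}^* \in \hat{\Vs}$ at \emph{every} loop --- is strictly stronger than what the theorem's hypotheses can deliver, and this is not a presentational issue: under mere $N$-step control invariance the invariant is false. If the checkpoint state $z \in \hat{\Vs}$ sitting at the receding index $r$ can only re-enter $\hat{\Vs}$ after $j^* \ge 2$ steps (which $N$-step invariance explicitly permits), then at a loop where the gap $N-r$ equals $1$ no admissible trajectory through the checkpoint can terminate in $\hat{\Vs}$; the hard terminal problem is then genuinely infeasible, the exact-penalty argument cannot ``bite,'' and your invariant breaks --- this is exactly the situation the receding constraint is designed to survive. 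The paper's proof carries the correct, weaker invariant: feasibility at every loop (maintained between resets by the hard receding constraint together with the shifted solution, using $\hat{\Vs} \subseteq \Vs$ to rebuild the tail beyond the checkpoint inside \Xs --- note your claim that the path constraints are ``inherited directly'' silently needs $x_{N|k}^* \in \Xs$, which only your too-strong invariant supplies, since $x_N$ is not subject to \eqref{eq:mpc_path_constraints}), plus terminal membership $x_{N|k}^* \in \hat{\Vs}$ at least \emph{once per window of $N$ loops}, which resets $r$ to $N-1$ and re-arms the window.

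The step you defer as ``the delicate point'' is in fact the theorem's entire content, and the paper resolves it with an alignment argument you never state: because $r$ decreases by exactly one per loop, the number of steps available from the checkpoint to the terminal slot, $N-r$, sweeps through every value $1, 2, \dots, N$ before $r$ can reach $0$; $N$-step invariance guarantees re-entry from the checkpoint in some $j^* \in [1,N]$ steps; hence at the loop where $N-r = j^*$, appending the invariance controls to the shifted plan yields a feasible trajectory with $x_N \in \hat{\Vs}$, exactness of the $\ell_1$ penalty (with $w_s$ above the optimal multiplier) forces $s=0$ at the optimum, and the in-set check in Alg.~\ref{alg:receding_with_abort} then resets $r$. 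Without this sweep you have no mechanism ruling out $r$ going negative, as you yourself concede. To be fair, the drift you worry about is real --- the solver may place a \emph{different} element of $\hat{\Vs}$ at index $r$ each loop, so the re-entry horizon of the current checkpoint can change, and the paper's own proof glosses this by implicitly treating the checkpoint trajectory as retained across loops --- but under that same convention the sweep argument closes the proof, whereas your proposal stops precisely where the proof begins. Finally, a minor point: invoking $N$-step invariance merely to furnish $\tilde{u}_{N-1}$ is unnecessary, since in \eqref{eq:receding_constr_mpc} the terminal state is only softly constrained, so any $u \in \Us$ extends the shifted candidate.
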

\begin{proof}
Assume the receding-constraint formulation is feasible at the first MPC loop $k=0$, which implies that $x_{N|0} \in \hat{\Vs}$.
This guarantees recursive feasibility for $N$ loops, during which the receding constraint can slide backward along the horizon.
However, since $\hat{\Vs}$ is $N$-step control invariant by assumption, we know that in one of those $N$ loops it will be possible to satisfy the soft terminal constraint $x_N \in \hat{\Vs}$.
This is because at each loop $k$, the MPC solver tries to satisfy condition~\eqref{eq:n_step_control_invariance} for a fixed value of $k$.
Since we know that~\eqref{eq:n_step_control_invariance} is feasible for some $k\in [1, N]$, we can infer that the soft terminal constraint $x_{N|k} \in \hat{\Vs}$ must be feasible for some MPC loop $k\in [1, N]$.
Under the assumption that $w_s$ is sufficiently large~\cite{nocedal1999numerical}, the used $l_1$ penalty function is exact, and therefore we can infer that at loop $k$, the soft terminal constraint will be exactly satisfied.
When this happens, the hard receding constraint moves to time step $N-1$, ensuring recursive feasibility for another $N-1$ loops.
At this point the same reasoning can be applied to ensure recursive feasibility indefinitely.
\end{proof}
This theorem highlights how the proposed receding-constraint MPC guarantees recursive feasibility even if the set $\hat{\Vs}$ is not control invariant. 
We rely indeed on a weaker condition, which is \emph{$N$-step control invariance}. 
Our condition is weaker because any $1$-step control invariant set is also $N$-step control invariant, for any $N>1$, therefore our approach guarantees recursive feasibility for a larger class of sets, which contains the class of control-invariant sets.
Unfortunately, computing exactly an $N$-step control invariant set is currently as hard as computing a standard control invariant set. 
However, in practice, it is more likely that a numerical method for approximating control invariant sets produces a set that is $N$-step control invariant, rather than control invariant. 
Therefore, as empirically shown in our results, our approach has a higher probability of being recursively feasible than a terminal-constraint MPC, even if in practice we cannot guarantee the assumptions of Theorem~\ref{th:recursive_feasibility} to be satisfied.


\subsection{Safe Task Abortion with Receding Constraint}
\label{ssec:abort_receding}
Since in practice we cannot guarantee that $\hat{\Vs}$ be $N$-step control invariant, we cannot guarantee that the receding constraint formulation be recursively feasible.
Therefore, we may need to use the task-abortion strategy when the receding constraint has reached time step 0. 
The problem is that at that point we have only one time step to solve OCP~\eqref{eq:safe_abort_ocp}.
In this paper, we assume that this computation time is enough, and we describe in Alg.~\ref{alg:receding_with_abort} (with \texttt{SafeAbortFlag} set to true) the Receding-Constraint MPC with Task Abortion. 

If one time step were not sufficient to solve~\eqref{eq:safe_abort_ocp}, several solutions could be explored.
We briefly discuss them in the following, but we leave their implementation for future work.
A possible way to reduce computation time is to pre-compute a warm-start for OCP~\eqref{eq:safe_abort_ocp}, before $r$ reaches 0. 
While we do not know in which state the system will be at that time, we can use the trajectory predicted by the MPC as a guess.
%
If this warm-start is not enough to solve OCP~\eqref{eq:safe_abort_ocp} in one time step, we could modify the receding-constraint formulation to ensure that the pre-computed \emph{safe-abort trajectory} starts exactly at the state of the system when the task abortion is initiated. 
To achieve this, we must modify the receding constraint from $x_{j|k} \in \hat{\Vs}$ to the more conservative $x_{j|k} = x_{j+1|k-1}$. 
In other words, we constrain the predicted state in $\Vs$ not to change across the MPC loops.
This is bound to deteriorate performance, but it should still outperform the standard Terminal-Constraint MPC.

\section{Simulation Results}
This section presents our simulation results\footnote{Our open-source code is freely available at \url{https://github.com/idra-lab/safe-mpc}.}.

\subsection{Simulation Setup}
To thoroughly evaluate the role of the \emph{safe abort} and the \emph{receding constraint}, we conducted an ablation study comparing five MPC formulations of increasing complexity:
\begin{itemize}
    \item \emph{Naive}: a classic formulation without terminal constraint, i.e.,  problem~\eqref{eq:mpc} with $\mathcal{X}_N = \Xs$. This is the baseline for all the experiments. 
    \item \emph{Soft Terminal} (\emph{ST}): it introduces a soft terminal constraint set $\mathcal{X}_N = \hat{\Vs}$ with a penalty weight of $10^4$, as a first step towards RF (Section~\ref{ssec:soft_terminal_constraint}).
    \item \emph{Soft Terminal With Abort} (\emph{STWA}): as the previous one, but it triggers the \emph{safe abort} whenever $x_{N|k} \notin \hat{\Vs}$ (as in Alg.~\ref{alg:terminal}).
    \item \emph{Hard Terminal With Abort} (\emph{HTWA}): it substitutes the soft terminal constraint of \emph{STWA} with a hard one, to satisfy the constraints of the theoretical OCP \eqref{eq:mpc}.
    \item \emph{Receding}: the novel formulation~\eqref{eq:receding_constr_mpc} described by Alg.~\ref{alg:receding_with_abort}, using soft constraints for both $x_r\in\hat{\Vs}$ and $x_N\in\hat{\Vs}$. The penalty weight on the receding constraint ($10^4$) is higher than the terminal one ($w_s = 10^2$) to mimic a hard constraint.
\end{itemize} 
For the simulations, we have considered a 3-joint manipulator, thus $n_x = 6,\, n_u = 3$. We have used \textsc{CasADi} \cite{Andersson2019} for the symbolic computation of the dynamics, costs and constraints, and \textsc{Acados}~\cite{Verschueren2019} to solve the OCPs and integrate the dynamics. The task is a setpoint regulation problem with respect to a state purposely chosen near the position limit of the first joint, to test the safety of the controllers:
\begin{equation}
    x^{\text{ref}} = (q^{\text{max}} - 0.05, \bar q, \bar q, 0, 0, 0),
    \label{eq:target_value}
\end{equation}
with $\bar q = (q^{\text{max}}+q^{\text{min}})/2$.
We have used as running cost a least-squares function, penalizing deviations from $x^{\text{ref}}$ and  control efforts:
\begin{equation}
    \begin{aligned}
        l(x, u) &= ||x-x^\text{ref}||_Q^2 + ||u||_R^2 \\
        Q &= \text{diag}([500, 10^{-4} I_5]), \quad R = 10^{-4} I_3,
    \end{aligned}
    \label{eq:weight_values}
\end{equation}
where $I_h$ is the identity matrix with size $h$. 
Set membership to $\hat{\Vs}$ is verified with the constraint:
\begin{equation} \label{eq:terminal_constraint}
(1-\alpha) \phi(x) - ||\dot{q}|| \ge 0,     
\end{equation}
where $\phi(\cdot)$ is a Neural Network (NN) computing an upper bound on the joint velocity norm~\cite{LaRocca2023}, and $\alpha \in [0, 1]$ is a safety margin that we introduced to ensure that $\hat \Vs \subseteq \Vs$. We have used \textsc{L4CasADi}~\cite{salzmann2023learning} for integrating the PyTorch~\cite{pytorch} neural model inside \textsc{Acados}. 

We have run 100 simulations for each MPC formulation, starting from the same 100 random joint positions $q_0$ with $\dot{q}_0 = 0$. The time step of the MPCs was $dt = \SI{5}{\milli\second}$. 
The horizon of the MPC has been fixed to $N=35$, 
so that each iteration takes less than $\SI{4}{\milli\second}$ (leaving $\SI{1}{\milli\second}$ for further operations, to mimic the timing limitations of a real-time application). 

\begin{table}[tb]
    \begin{center}
    \caption{Number of times each controller completed the task, safely aborted it, or violated a constraint (with $\alpha=2\%$).}
    \label{table:metrics_safety_2}
    {\renewcommand\arraystretch{1.3}
    \begin{tabular}{ cccc } 
    \toprule
    \sc{MPC} & \sc{Completed} & \sc{Aborted}  & \sc{Failed} \\ 
    \midrule
    \sc{Naive}  & 69 & - & 31 \\
    \sc{ST} & 69 & - & 31 \\
    \sc{STWA} & 70 & 11 & 19 \\
    \sc{HTWA} & 70 & 8 & 22 \\
    \sc{Receding} & 77 & 18 & 5  \\
    \bottomrule
    \end{tabular}
    }
    \end{center}
\end{table}


\begin{table}[tb]
    \begin{center}
    \caption{Number of times each controller completed the task, safely aborted it, or violated a constraint (with $\alpha=10\%$).}
    \label{table:metrics_safety_10}
    {\renewcommand\arraystretch{1.3}
    \begin{tabular}{ cccc } 
    \toprule
    \sc{MPC} & \sc{Completed} & \sc{Aborted}  & \sc{Failed} \\ 
    \midrule
    \sc{Naive}  & 69 & - & 31 \\
    \sc{ST} & 69 & - & 31 \\
    \sc{STWA} & 70 & 22 & 8 \\
    \sc{HTWA} & 70 & 21 & 9 \\
    \sc{Receding} & 77 & 20 & 3  \\
    \bottomrule
    \end{tabular}
    }
    \end{center}
\end{table}
\subsection{Small Safety Margin Tests}
Table~\ref{table:metrics_safety_2} reports the number of tasks completed, safely aborted, or failed by each controller, using a safety margin $\alpha=2\%$.
In terms of safety, \emph{Naive} and \emph{ST} violated the constraints the most. 
\emph{STWA} and \emph{HTWA} completed practically the same number of tasks as the previous methods, but in some cases were able to successfully abort the task, reducing the number of failed tasks. In general, \emph{Receding} performed the best, since it completed more tasks than the other formulations and safely aborted most of the remaining tests. Overall, \emph{Receding} failed only 5 tasks, which is $\approx$4 times less than the best competitor.

\subsection{Large Safety Margin Tests}
To try to reduce even more the number of failed tasks, we have carried out a second comparison with a higher safety margin $\alpha=10\%$ (see Table~\ref{table:metrics_safety_10}): 
the number of completed tasks is left unvaried, while the failures decreased (only 3 for \emph{Receding}). We can notice how a larger safety margin $\alpha$ increased the probability of being inside the real set $\mathcal{V}$, resulting in a higher success rate of the safe-abort OCP~\eqref{eq:safe_abort_ocp}. 

\begin{figure}[tbp]
    \centering
    \includegraphics[width=\columnwidth]{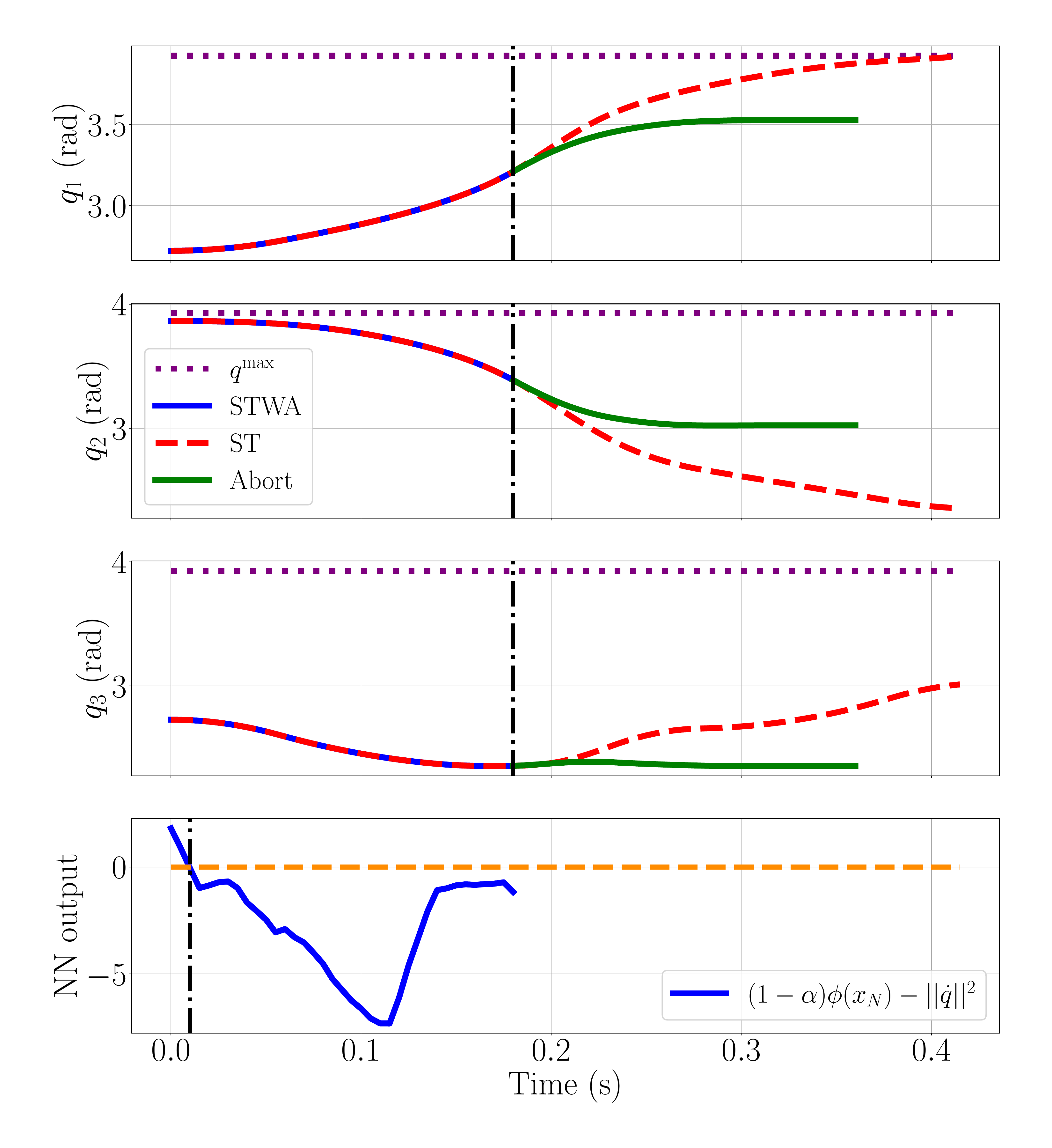}
    \caption{Comparison between \emph{ST} (task failed) and \emph{STWA} (task aborted). The bottom plot shows the value of the terminal constraint~\eqref{eq:terminal_constraint}. The vertical line highlights the start of the safe-abort trajectory.}
    \label{fig:abort_case}
\end{figure}
%
%
Fig.~\ref{fig:abort_case} highlights the different risk-aversion levels of \emph{ST} and \emph{STWA} by means of a specific simulation with $\alpha=10\%$. After a few time steps, the soft terminal constraint gets violated (NN output less than zero in the bottom plot). At this point, \emph{ST} is willing to take the risk, and keeps optimizing, which leads to $q_1$ hitting the position upper-bound at $t \simeq \SI{0.4}{\second}$. On the contrary, \emph{STWA}, triggers the task abortion as soon as the left-hand-side of~\eqref{eq:terminal_constraint} becomes negative; then, the system starts to follow a safe trajectory (green line) after $N - 1$ MPC steps. This shows how the safe abort can prevent a drastic failure. 


\subsection{Cost and Computation Time}
\begin{table}[tb]
    \begin{center}
    \caption{Mean tracking cost increment (with respect to Naive controller) and computation times for the MPC real-time iteration (RTI) and safe abort OCP.}
    \label{table:cost_times}
    {\renewcommand\arraystretch{1.3}
    \begin{tabular}{ ccccc } 
    \toprule
    \sc{MPC} & \sc{Cost $\uparrow$} ($\%$) & \sc{RTI} ($\SI{}{\milli\second}$) & \sc{Safe Abort} ($\SI{}{\second}$) \\
    \midrule
    \sc{Naive}    &    -    &  $3.75$ & - \\
    \sc{ST}       & $0.005$ &  $5.50$ & - \\
    \sc{STWA}     & $0.042$ & $3.73$ & $0.13$ \\
    \sc{HTWA}     & $0.042$ & $3.88$ & $0.10$ \\
    \sc{Receding} & $0.023$ & $3.95$ & $0.08$ \\
    \bottomrule
    \end{tabular}
    }
    \end{center}
\end{table}
In terms of cost, Table~\ref{table:cost_times} shows the average cost increment (in percentage) for the completed tasks (with $\alpha=10\%$). The increase is computed with respect to the \emph{Naive} mean cost through \eqref{eq:weight_values}. The percentages are very low for all the proposed formulations, thus the tracking performance is not degraded by the extra constraints using $\hat \Vs$. 

The same table also reports the computation times.
The 99-percentile for the Real-Time Iteration (RTI) scheme~\cite{diehl2005nominal} is always below the ideal computation time of $\SI{4}{\milli\second}$, apart from the \emph{ST} formulation, showing the benefits of our formulations also from this perspective. 
We should disclose that the RTI computation time of \emph{Receding} is not measured, but estimated: indeed, the Python interface of \textsc{Acados} does not support time-varying constraints. Therefore our current implementation of \emph{Receding} actually soft constrains the whole state trajectory in $\hat{\Vs}$, but then sets to zero the penalty weights for all time steps except for $r$ and $N$, resulting in a much higher computation time than needed. Thus we have estimated the solver computation time as the sum of the time to solve the Quadratic Program of \emph{Receding} ($\SI{3.47}{\milli\second}$) and the time to compute a single constraint linearization ($\SI{0.48}{\milli\second}$), taken from \emph{STWA}. We believe this estimate to be reasonably accurate, and since it is lower than $dt$, we think that a proper implementation of \emph{Receding} would work in real-time scenarios. 

The last column reports the maximum computation times for the Task Abortion. All the values satisfy Assumption~\ref{ass:ocp_time}. The backup OCP for \emph{Receding} should be solved in one time-step, which is not possible with these computation times. Anyway, possible strategies can be applied to deal with the computation requirements, as discussed in Section~\ref{ssec:abort_receding}.

\section{Conclusions}
We have presented a novel Receding-Constraint MPC formulation, which provides recursive feasibility guarantees under a weaker assumption on the used safe set with respect to classic approaches. 
Moreover, we have presented a task-abortion strategy that allows to reach an equilibrium state whenever a risk of constraint violation is detected. 
Our results on a 3-joint manipulator show the improved safety of the presented Receding-Constraint MPC with respect to other state-of-the-art methods.

While our method relies on weaker assumption than standard approaches, these assumptions are still hard to verify in practice, which can lead to constraint violation, as shown in our tests. Inspired by recent work~\cite{Robey2020}, we want to investigate the use of robust optimization techniques to certify N-Step Control Invariance.
Future research will focus also on reducing the computation times of the safe-abort OCP~\eqref{eq:safe_abort_ocp} to make it usable with the Receding-Constraint MPC. For this, we plan to extend the method in~\cite{LaRocca2023} to learn both the set $\hat{\Vs}$ and a policy that drives the state to an equilibrium. Then the policy can be used to warm-start the solver. 
We also plan to account for uncertainties in the dynamics using robust optimization, and to include Cartesian constraints to avoid collision with static obstacles.  
While this work focused on model-based control methods, our approach could be applied in the future to \emph{safety filters} for making black-box RL policies safe.

\addtolength{\textheight}{-9cm}   


\bibliographystyle{IEEEtran}
\bibliography{IEEEabrv,main.bib}

\end{document}